\documentclass{article}
\usepackage{iclr2027_conference,times}
\iclrfinalcopy

\usepackage{amsmath,amssymb,amsthm}
\usepackage{graphicx}
\usepackage{booktabs}
\usepackage{tikz}
\usetikzlibrary{arrows.meta,positioning,calc,fit,backgrounds,shapes.geometric}
\usepackage{hyperref}
\usepackage{url}
\newtheorem{proposition}{Proposition}

\definecolor{gInk}{gray}{0.10}
\definecolor{gDark}{gray}{0.35}
\definecolor{gMid}{gray}{0.62}
\definecolor{gLite}{gray}{0.86}
\definecolor{gPale}{gray}{0.95}
\definecolor{gEnc}{gray}{0.78}   

\author{Samuel Tetteh \& Cody Fleming \\
Iowa State University\\
Ames, Iowa, USA\\
\texttt{\{samtett, flemingc\}@iastate.edu}}

\title{It Is Not Seeing the Hazard: A Frozen Vision-Language 
       Safety Score Measures Its Caption Bank}

\begin{document}
\maketitle

\begin{abstract}
Frozen vision--language models increasingly provide safety signals for reinforcement learning. Their use assumes that similarity to language describing danger indicates the hazard itself. Yet policy return and collision rate cannot reveal whether a score detects hazards or responds to correlated features of the scene. VLM-based methods have reported gains in driving and safe-RL benchmarks by converting image--text similarity into rewards, costs, or confidence weights. Such signals promise to reduce reliance on manually designed feedback. They may also reflect prompt structure, embedding geometry, or camera viewpoint, leaving their safety meaning unverified. To address this gap, we present a controlled evaluation of a frozen CLIP prompt-margin safety score. We apply the score to trajectories generated by policies that never receive it, match pre-contact observations to contact-free observations with comparable hazard geometry, and vary the captions, encoder, and camera view. Across three policies, 180 episodes, and 130 isolated contact onsets, the score decreases for about twenty steps before contact. Mechanism controls indicate that the score mainly tracks resemblance to the scene shared by its captions and changes with caption separation and camera view. A constant-confidence control retains the lower catastrophe-rate point estimate, so policy gains do not establish hazard perception.
\end{abstract}

\section{Introduction}

A frozen vision--language model can describe a collision. Safe reinforcement
learning methods increasingly treat that ability as evidence that the model can
detect one \citep{rocamonde2024vlmrm,huang2025vlmrl}. Similarity between a
rendered frame and safety captions then becomes a cost, a reward penalty, or a
confidence weight. Policy return and collision rate can show whether the system
works, but they cannot show what the score measures, because a score that
correlates with failure for another reason could improve them equally.

We audit a frozen CLIP \citep{radford2021clip} prompt-margin construction and
its confidence output $\kappa$. The policies generating the trajectories never
observe either quantity, so their actions and contact events cannot respond to
the score.
Across three VLM-free FormulaOne policies, $180$ episodes, and $130$ isolated
contact onsets, $\kappa$ exhibits a negative pre-contact deviation lasting about
twenty steps. The deviation persists under five matching specifications using distance and
bearing to the nearest hazard in any direction, together with nearest-hazard
distance and minimum time-to-contact inside a $\pm45^\circ$ frontal cone.
This establishes that $\kappa$ carries pre-contact information. It does not
establish what drives it.

\begin{figure}[t]
\centering
\resizebox{\textwidth}{!}{\input{figures/framework}}
\caption{\textbf{Measurement and main finding.} The policy never observes the
score. Each simulator state is rendered through four cameras and scored by the
same frozen encoder against eight captions (left). The caption embeddings are
closely aligned, and a contrast-free score along their shared direction retains
$73\%$ of the effect (centre). Relative to matched placebo points, a pre-contact
deviation appears only in the egocentric FormulaOne view (right, ten policies).
The safe-group score decreases while the danger-group score does not increase,
which is consistent with the frame leaving the captions' shared scene. Further
environments show that a different view may carry the signal
(Section~\ref{sec:generalisation}).}
\label{fig:framework}
\end{figure}

Two explanations remain. A hazard-detection account attributes the change in
$\kappa$ to a visible approaching obstacle. It predicts the strongest deviation
near hazards and in captions that name a collision. A scene-template account
attributes the change to the camera view departing from the racecar-on-track
scene shared by the caption bank. A vehicle can become misaligned before a hazard is visible, so a scene-template
score should change with track-related captions and with the captions' shared
direction.

The hazard and caption tests support the scene-template account. On the same
three policies, the pre-contact deviation is $-1.37$ standard deviations when
hazards are far. It is $-0.34$ when hazards are near, with an interval that
includes zero. Captions from both groups that name the car's position on the
track show negative deviations. The two captions that name a collision without
describing the track show no resolved deviation. Across nine policies, a
contrast-free score built from the captions' common direction retains $73\%$ of
the prompt-margin deviation. Negating the four safe captions also preserves
the deviation, while captions about unrelated scenes reverse its direction.
Together, these results show that the score decreases when a frame no longer
resembles the racecar-on-track setting described throughout the caption bank, a
visual change that can precede contact even when no approaching hazard is
visible.

The safe/unsafe split adds information beyond the shared scene signal, since the
deployed split ranks third among all balanced partitions. How much contrast it
can express is limited by the separation between the two caption groups in the
encoder's text space. In ViT-B/32, mean cross-group similarity equals mean
within-safe similarity at $0.883$, so the groups are poorly separated. Both
group scores decrease before contact, and the danger score does not rise.
Replacing the caption bank with two better-separated labels makes the danger
score rise in the same encoder. It also rises when ViT-L/14 receives the original
captions. A generative VLM produces a less consistent rise with that bank. The
prompt margin still decreases in every configuration, so scorer class alone does
not explain the pattern.

Caption separation is not the only source of variation. The rendered view also
changes the signal. To isolate viewpoint, we render every FormulaOne trajectory through four cameras from the same simulator
states, with identical actions, geometry, and contacts. Across ten policies, a
significant pre-contact deviation appears only in the egocentric view. In MetaDrive's deployed top-down rendering, which includes a motion trail, no
deviation is resolved at any tested lag (Section~\ref{sec:generalisation}).
PointGoal1 shows a resolved difference between views, while CarButton1 does not.
No single view is consistently informative across environments.
Changing the camera can therefore remove the measured pre-contact signal without
changing the dynamics.

Open-loop informativeness and closed-loop policy improvement are separate
questions. In MetaDrive-Hard, fixing the confidence output at a constant
reduces catastrophe
rate against a no-VLM baseline ($p=0.036$), although the reduction is below the
design's minimum detectable effect. The constant and deployed arms differ by
$0.016$, with no detected difference ($p=0.71$). Confidence only scales a reward
bonus in these arms and never enters the cost channel. The policy gain therefore
does not require confidence to track the current frame. Methods that send an
epoch-mean signal into the cost channel face a separate failure mode. If that
mean is affine in the true cost, it changes the effective cost budget without
adding statewise information (Appendix~\ref{sec:formal}).

\paragraph{Contributions.} We provide an open-loop audit that separates a VLM
safety score from the policy trained with it. Four controls support a
caption-bank readout and show that its informativeness depends on the observation
map. We connect danger-score behavior to caption geometry across the
configurations where that geometry is measurable. A closed-loop placebo shows
that policy improvement does not establish perception. We also document four
defects in the Safety-Gymnasium \citep{ji2023safetygymnasium} and MetaDrive
\citep{li2022metadrive} evaluation stack that silently produce invalid results.

\section{Related work}

\paragraph{Frozen VLMs as training signals.} Frozen VLMs already supply usable rewards. CLIP similarity to one sentence trains a MuJoCo humanoid
\citep{rocamonde2024vlmrm}, video--language similarity turns one demonstration
into a reward \citep{sontakke2023roboclip}, and a contrastive video--language
model densifies Minecraft tasks \citep{fan2022minedojo}. They also supply general reward
\citep{baumli2023vlmrewards}, and detect success after in-domain fine-tuning
\citep{du2023successdetectors}, which is the case a frozen setting excludes. We analyse the safety instance of
\citet{huang2025vlmrl}, which \citet{tetteh2026visionlanguagesignalsconstrainedrl} extend by routing a
per-step CLIP term through the constrained learner's dual.
These works judge a signal \emph{end to end} through the return or collision
rate of a trained policy. That outcome cannot distinguish perception from a
score that correlates with failure for another reason. Closest in spirit, \citet{yang2021nlconstraints} interpret textual
constraints for constrained RL, but their interpreter is \emph{trained} against
the environment's cost, while the signals we study are frozen and never adapted
to the observation map.

\paragraph{Caption banks, and what CLIP encodes.} Scoring an image against a
caption bank and reading how strongly it matches the bank is the standard
construction for zero-shot out-of-distribution detection, where the score is a
maximum over concepts and the measured quantity is explicitly how well the image
matches \emph{any} known concept \citep{ming2022mcm}. A safety score built from the same image--text cosine similarities behaves as
that literature predicts, not as the safety literature assumes. Representation analysis agrees. CLIP-style encoders behave
substantially like bags of words \citep{yuksekgonul2023bagofwords}, are
insensitive to negation \citep{alhamoud2025negation}, map visually distinct
images to near-identical embeddings \citep{tong2024eyeswideshut}, and carry
limited multi-view consistency \citep{elbanani20243dawareness}. Such an encoder reports \emph{which scene this frame resembles}, not
\emph{what is about to happen}. Our negation and off-topic controls show that the first two properties also
appear in a safety score treated as a hazard detector.

\paragraph{Optimising an imperfect signal, and guarantees that price
perception.} \citet{gao2023overoptimization} characterise how optimising a
proxy degrades true performance under pressure. Our concern comes before optimisation. The proxy's mechanism was
misidentified, so its domain of validity is unknown and depends on viewpoint in
the environments we tested. \citet{dean2020mrcbf} instead price perception error
explicitly in a control-theoretic certificate. That is the right shape of guarantee, but the audited prompt-margin score does
not provide the state estimate or error bound such a certificate requires.

\paragraph{Constrained RL and its evaluation.} We work inside the standard
constrained MDP \citep{altman1999cmdp}, with its usual solvers
\citep{achiam2017cpo,stooke2020pid} and the common benchmark stack
\citep{ray2019safetygym,ji2023safetygymnasium,li2022metadrive}. The relevant
precedent is \citet{ng1999shaping}, since a shaping term of restricted form
cannot change the optimal policy. In the closed-loop experiment, we follow
\citet{agarwal2021precipice} by treating the seed as the unit and reporting
intervals. This matters because one effect is smaller than its design's minimum detectable
difference.

\section{Measurement protocol}
\label{sec:setup}

We evaluate a frozen scorer as a passive observer of trajectories it cannot
influence. This design separates what the score measures from how a policy
responds to it. We examine policy effects separately in the closed-loop
experiment.

\noindent\textbf{Scorer.} CLIP ViT-B/32 encodes each frame. Mean cosine similarities
to the safe and danger caption groups produce $s_+$ and $s_-$, with margin
$m=s_+-s_-$. The deployed confidence is
$\kappa=|2\sigma(a(m-c))-1|$, where $a=100$ and $c=0$. The margin is positive on $99\%$ of egocentric FormulaOne steps but less often
in other views. Thus $\kappa$ is monotone in $m$ only away from the fold at
$m=0$ and is not a two-sided confidence measure without recentering. We use fp32 because fp16 reduces the
similarities to about $17$ distinct values.

\noindent\textbf{Scope and controls.} Unless stated otherwise, the FormulaOne
evaluation uses ten trained policies from three constrained RL algorithms and
$60$ deterministic episodes per policy. Passive-observer policies never
receive $\kappa$, preventing the score from changing the actions and contact
events being evaluated. Caption and encoder controls rescore the same stored
frames. Camera controls render the same simulator state at the same step and
verify that each camera produces distinct frames. Generalisation experiments
cover three additional environments, while the closed-loop experiment trains
new policies.

\noindent\textbf{Event study.} A contact onset is a transition from zero to positive
environment cost after $20$ contact-free steps. We analyse lags
$-40,\ldots,+10$ and summarise the pre-contact deviation at lags
$-15,-10,-5$. Signals are standardised within each episode. Every onset is
compared with a contact-free placebo from the same episode, and confidence
intervals use an episode-cluster bootstrap. Geometry controls additionally
match observations using nearest-hazard distance and bearing, frontal distance,
and frontal time-to-contact. Here, nearest-hazard distance covers every
direction, while frontal distance and time-to-contact consider hazards within a
$\pm45^\circ$ cone. These choices were fixed before analysis.

\section{What can the score measure?}
\label{sec:theory}

The prompt margin is commonly interpreted as relative evidence for safety or
danger. This interpretation requires changes in the margin to reflect hazard
content. Every caption also describes the same broad scene, a car on
a racetrack. Both caption groups may therefore respond when a frame departs
from that setting, even when no hazard is visible. We distinguish this
scene-template explanation from direct hazard detection.

The two explanations make different predictions. A hazard detector should
respond most strongly when hazards are near or directly ahead and through
captions that explicitly describe collisions. A scene-template score should
respond to track layout, vehicle alignment, and camera view. It may change
before contact while hazards remain distant because an off-course vehicle has
already departed from the expected scene. Matching hazard geometry, changing
the captions, and rendering identical states from different views test these
predictions.

Whether the danger score rises is a separate question. The safe and danger
groups can diverge only along the difference between their text embeddings.
When the groups are poorly separated, both similarities may decrease together.
Better separation allows a stronger contrast. A danger-score increase should
therefore follow caption geometry, not scorer class alone.

The formal results in Appendix~\ref{sec:formal} clarify three limits. The
prompt margin cancels the exact shared direction of the captions, making a
score along that direction a separate diagnostic. A camera view may contain
information about the simulator state that a fixed encoder and prompt direction
cannot use. Finally, a cost term whose occupancy-level mean is affine in the
true cost can be absorbed into a retuned budget when passed through an
epoch-averaged scalar dual. This last result is conditional and does not imply
that the visual score is uninformative. Section~\ref{sec:results} tests the
empirical predictions.

\section{Results}
\label{sec:results}

\subsection{The pre-contact signal survives geometric matching}
\label{sec:signal}

Across three VLM-free FormulaOne policies, $180$ episodes, and $130$ isolated
contact onsets, $\kappa$ shows a negative deviation from same-episode placebos
beginning about $20$ steps before contact. The deviation remains approximately
constant during this period and does not sharpen at contact.

We test whether hazard proximity explains this pattern by matching each
pre-contact observation to contact-free observations in the same geometric
cell. A nonparametric control is necessary because $\kappa$ varies
non-monotonically with distance. It rises from $0.716$ in the closest distance
decile to approximately $0.79$ at intermediate distances, then falls to $0.650$
in the farthest decile. Linear controls explain only $R^2=0.009$--$0.134$ of its
variation, with coefficients that change sign across seeds.

Across five matching specifications, the deviation ranges from $-0.131$ to
$-0.153$, compared with $-0.160$ before matching. This corresponds to
approximately $-0.8$ within-episode standard deviations. Each specification
retains $98\%$--$100\%$ of pre-contact observations, so the result is not driven
by sample attrition (Appendix~\ref{app:matching}). The deviation remains after
controlling for nearest-hazard distance and bearing, frontal distance, and
frontal time-to-contact. Closing speed and local hazard density were included
only in the unreliable linear controls, so we draw no conclusions about them.

\subsection{The signal depends on viewpoint}
\label{sec:viewpoint}

We render each simulator state through four cameras at the same rollout step,
holding the trajectory, contact sequence, and approach geometry fixed. Only the
camera view changes.

Across ten policies and $178$ episodes available in all four views, the
egocentric view shows a resolved deviation of $-0.447\,[-0.706,-0.204]$. The
chase and two overhead views yield $-0.137$, $-0.080$, and $-0.049$. Contrasts
between the egocentric and overhead views are resolved at $-0.367$ and $-0.398$
(Appendix~\ref{app:cameratab}). The chase contrast is a boundary result because
its interval clears zero by only $0.001$ and changes with the placebo draw.

The egocentric estimate decreases from $-0.800$ with three policies to $-0.447$
with ten, making the pooled estimate the appropriate result. The deviation
appears within PPO-Lagrangian and PID-Lagrangian but is unresolved within CPO,
which contributes only two policies and $27$ episodes.

Neither score variability nor camera distance explains the pattern. The chase
view has the highest variability, while the three detached views remain similar
and show no stable ordering by distance. The chase camera also preserves a
comparable agent scale but carries little of the egocentric effect. We did not
measure apparent hazard size.

Replacing ViT-B/32 with ViT-L/14 on identical trajectories produces no resolved
difference in the egocentric effect, although the interval is too wide to
exclude one. The egocentric advantage persists, but the near-overhead result
reverses. Viewpoint dependence therefore survives this encoder change, while
its exact pattern remains uncertain (Section~\ref{sec:geometry}).

\subsection{No camera view is consistently informative}
\label{sec:generalisation}

In FormulaOne, the pre-contact deviation appeared only in the egocentric view
and was absent from both overhead views. Because MetaDrive uses a top-down renderer, we predicted no pre-contact
deviation in its deployed rendering, which includes a motion trail. The results
were consistent with this prediction. No deviation is resolved at any of $51$ lags across $416$ isolated onsets, $600$ episodes, and
ten seeds. The score remains variable, with a coefficient of variation of
$0.157$, so this is not a constant-output failure. Removing an unintended
$60$-step motion trail also has little effect, shifting the band deviation by
$+0.082\,[-0.029,+0.193]$.

The viewpoint rule does not generalise. We predicted that the egocentric
pattern would repeat in \emph{CarButton1} and \emph{PointGoal1}, using
environment-specific captions and smoothed random policies. Instead,
\emph{CarButton1} shows the deviation in three of four views, while
\emph{PointGoal1} shows it only overhead. Its paired overhead-to-egocentric
contrast is $+0.579\,[+0.211,+0.957]$ (Table~\ref{tab:generalisation}).

\begin{table}[t]
\centering
\caption{Pre-contact deviation under the same four camera views. No view is
consistently informative across environments. $*$ marks intervals excluding
zero.}
\label{tab:generalisation}
\small
\begin{tabular}{lrrrr}
\toprule
environment & egocentric & chase & overhead near & overhead far \\
\midrule
FormulaOne (10 policies) & $\mathbf{-0.447}^{*}$ & $-0.140$ & $-0.080$ & $-0.049$ \\
CarButton1 (506 onsets) & $-0.217^{*}$ & $-0.149^{*}$ & $-0.223^{*}$ & $-0.121$ \\
PointGoal1 (156 onsets) & $-0.001$ & $+0.049$ & $\mathbf{-0.580}^{*}$ & $\mathbf{-0.362}^{*}$ \\
\bottomrule
\end{tabular}
\end{table}

We then tested whether the overhead advantage arose because hazards were more
visible from above. Raising non-colliding hazards changed their appearance
while preserving identical trajectories, but shifted the egocentric advantage
by only $-0.084\,[-0.387,+0.228]$. This does not support the proposed visibility explanation under the
prespecified criterion. The useful signal therefore
depends on the environment and observation map, with no camera view
consistently informative (Appendix~\ref{app:generalisation}).

\subsection{Caption geometry governs the danger-score increase}
\label{sec:language}
\label{sec:geometry}

Across all $\binom{8}{4}=70$ balanced partitions of the eight captions, the
deployed grouping ranks third ($p=0.043$), while the mean partition still
produces a deviation of $-0.113$. The deployed grouping is therefore informative
but not necessary for the effect. A score using only the shared direction of the
captions retains $73\%$ of the deployed margin
(Appendix~\ref{app:partitions}). The margin outperforms four text-free
embedding scores (all $p\leq0.0008$) and every tested random direction (largest
$p=0.016$), but not pixel variance or edge density. These comparisons pool nine
policies and remain unresolved with only three (Appendix~\ref{app:language}).
All caption controls rescore the same stored frames, holding trajectories,
contacts, and matching fixed (Appendix~\ref{app:frames}).

\noindent\textbf{Caption meaning.} Negating the four safe captions preserves
the deviation at $-0.507\,[-0.844,-0.170]$, compared with
$-0.527\,[-0.865,-0.191]$ for the deployed bank. Their mean within-episode
correlation is $0.95$. Implausible captions about the same racecar produce a
similar but unresolved deviation of $-0.363\,[-0.732,+0.004]$, while captions
describing unrelated scenes reverse it to $+0.441\,[+0.031,+0.857]$. The score
therefore responds more strongly to the scene described by the captions than to
what they assert about it.

\noindent\textbf{Text geometry.} In ViT-B/32, mean cross-group similarity is
$0.883$, equal to similarity within the safe group and close to $0.898$ within
the danger group. The group centroids have cosine similarity $0.962$, leaving
little safe--danger separation. ViT-L/14 separates the same captions more
clearly, with cross-group similarity $0.794$, within-group similarities of
$0.808$ and $0.845$, and centroid similarity $0.913$.

\begin{table}[t]
\centering
\caption{Pre-contact deviation by scorer configuration across three passive-observer
policies and 71 episodes in the egocentric view. All configurations use identical
trajectories. $*$ marks intervals excluding zero. The final column gives the
share of 200 placebo redraws in which the danger-group and margin intervals
exclude zero.}
\label{tab:geometry}
\footnotesize
\setlength{\tabcolsep}{4pt}
\begin{tabular}{llrrrc}
\toprule
scorer & captions & safe group & danger group & margin & redraws \\
\midrule
CLIP ViT-B/32 & 8-caption bank & $-0.527^{*}$ & $-0.233$      & $-0.845^{*}$ & 27.5/100\% \\
CLIP ViT-L/14 & 8-caption bank & $+0.141$     & $+0.842^{*}$  & $-0.711^{*}$ & 100/100\% \\
CLIP ViT-B/32 & 2 group labels & $-0.176$     & $+0.394^{*}$  & $-0.690^{*}$ & n/a \\
Qwen2-VL-7B   & 2 group labels & $-0.410^{*}$ & $+0.229$      & $-0.350^{*}$ & 51/86.5\% \\
Qwen2-VL-7B   & 8-caption bank & $-0.592^{*}$ & $+0.293^{*}$  & $-0.524^{*}$ & 72/99\% \\
\bottomrule
\end{tabular}
\end{table}

\noindent\textbf{Scorer class.} Table~\ref{tab:geometry} separates the effects
of captions and model architecture. Changing from ViT-B/32 to ViT-L/14 with the
same captions shifts the danger score by $+1.076\,[+0.704,+1.468]$. Changing
the captions within ViT-B/32 shifts it by $+0.627\,[+0.285,+0.971]$. Both
changes produce a stronger danger-score increase than changing from a
contrastive to a generative scorer. The most robust increase belongs to
ViT-L/14, a contrastive encoder, while the margin decreases in every
configuration. Caption separation therefore explains the danger-score increase
more consistently than scorer class. Whether the ViT-L/14 increase reflects
hazard perception remains untested because the mechanism controls use
ViT-B/32.

\subsection{The deviation is strongest when hazards are distant}
\label{sec:moderation}

Ground-truth hazard geometry is available for three of the ten passive-observer
policies. In the egocentric view, the deviation is
$-0.342\,[-1.065,+0.334]$ when the nearest hazard is in the closest distance
tercile and $-1.369\,[-1.785,-0.958]$ when it is in the farthest. It is
$-0.786\,[-1.238,-0.341]$ when a hazard lies within the $\pm45^\circ$
frontal cone and $-1.797\,[-2.402,-1.204]$ when none does
(Appendix~\ref{app:moderation}).

We did not test the differences between strata, so these estimates should not
be interpreted as formal contrasts. The near-hazard interval excludes zero in
only $10\%$ of placebo redraws, compared with $99\%$--$100\%$ for the other
strata. The no-hazard-ahead estimate is based on only eight episodes, while the
far-hazard estimate includes $44$. Despite these limitations, both comparisons
follow the scene-template prediction and explain why matching on hazard geometry
does not remove the pre-contact deviation.

\subsection{Closed-loop gains persist with constant confidence}
\label{sec:closedloop}

We test whether frame-dependent confidence explains closed-loop performance in
MetaDrive-Hard. PPO-Lagrangian agents receive a reward bonus scaled by
$\kappa$, which does not enter the cost channel. Four arms share all thirteen
hyperparameters. The no-VLM and deployed arms use ten seeds each, while the
constant and yoked controls use fifteen each. Every seed is evaluated over
$200$ deterministic episodes.

The constant arm fixes $\kappa=0.850$, its mean over $600$ no-VLM rollouts. The
yoked arm replays a score trace from another episode, preserving its
distribution and lag-one autocorrelation of $0.99$ while removing correspondence
with the current frame. This separates visual content from temporal smoothness.
The design was fixed before any run. Table~\ref{tab:placebo} reports all four arms.

The prespecified primary comparison shows that constant confidence reduces
catastrophe rate by $0.106$ against the no-VLM baseline ($p=0.036$). This estimate is below the minimum detectable effect of $0.133$, so its
magnitude remains uncertain. Catastrophe rates are $0.281$ for no VLM, $0.192$
for the deployed score, $0.176$ for constant confidence, and $0.210$ for the
yoked control. The constant and deployed arms differ by $0.016$, with no
detected difference ($p=0.71$). This is not evidence of equivalence. No
comparison involving the yoked arm is resolved ($p\geq0.11$).

The secondary violation-rate reduction is $0.136$ ($p=0.021$, uncorrected), but
no secondary comparison survives Benjamini--Hochberg correction. All tests are
two-sided seed-level permutation tests. These results show that the observed
safety gain can arise without frame-dependent visual information. Policy
improvement therefore does not establish that the scorer perceives hazards.

\begin{figure}[t]
\centering
\includegraphics[width=0.52\textwidth]{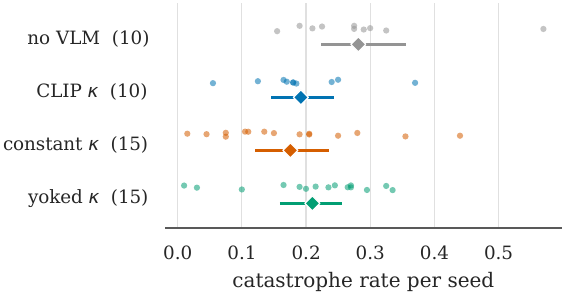}
\caption{Catastrophe rate for every seed over $200$ deterministic episodes.
Diamonds and bars show arm means and $95\%$ bootstrap intervals over seeds.
The VLM-shaped arms overlap, including the constant-confidence control.}
\label{fig:placebo}
\end{figure}

\section{Four defects in the evaluation pipeline}
\label{sec:defects}

We identified four defects in an evaluation pipeline built with Safety-Gymnasium,
OmniSafe, and MetaDrive. They arise in scenario handling, configuration, hardware, and logging, and each can invalidate the intended comparison. The first
two are also reported by \citet{tetteh2026visionlanguagesignalsconstrainedrl}. \textbf{D1. Held-out scenarios alias to training scenarios.} The evaluation procedure reduces scenario indices modulo the configured scenario count. An evaluation
offset of $10^4$ therefore maps directly onto training indices. For one MetaDrive-Hard policy with traffic fixed, the rollout obtained after
clamping index $10007$ to index $7$ is bit-identical to a second rollout at
index $7$. We hold
this defect constant across closed-loop arms, preserving their relative
comparison, but the absolute rates are not held-out estimates. \textbf{D2. Traffic is re-randomised at every reset.} For one MetaDrive-Hard policy and scenario index, two rollouts differ at all
$1{,}000$ steps. Disabling random traffic makes them bit-identical, showing
that paired evaluation under this configuration requires fixed traffic. \textbf{D3. Deterministic rollouts depend on the hardware group.} Five replays on the hardware group that generated the references reproduce
exactly, while none of seven replays on other tested groups do. For FormulaOne replays, we require provenance records and reject outputs that
fail deterministic replay checks. \textbf{D4. Logged return includes the shaping bonus.} Estimated from the last five training epochs, the VLM bonus contributes
$99.1\%$--$99.8\%$ of logged return in nine FormulaOne-L2 runs without
confidence gating.
Recomputed environment returns overlap the no-VLM baselines, so the logged
quantity does not measure task performance. Evaluation must report
environment-only return separately. Appendix~\ref{app:defects} provides the supporting measurements and verification
guards. We correct the traffic and return defects, restrict replay validation
to verified hardware, and hold scenario aliasing constant across the
closed-loop arms.

\section{Limitations and conclusion}
\label{sec:limitations}
\label{sec:sowhat}

The empirical scope is limited in several ways. The egocentric-only pattern is specific to FormulaOne. PointGoal1 shows a
resolved paired difference between views, while CarButton1 does not, and no
particular view is consistently informative across environments. Three scorers, including one
generative model, cannot characterise VLM families broadly, and the ViT-L/14
near-overhead reversal remains unexplained. The caption and hazard-stratification
analyses use egocentric frames and one placebo draw per episode, although we
report stability across $200$ redraws. The closed-loop experiment covers one
environment, algorithm, and placebo family. These results therefore support
claims about the tested configurations, not every VLM safety signal or trained
policy. Appendix~\ref{app:limitations} details these limitations.

For the tested CLIP ViT-B/32 construction, the evidence supports a
scene-template explanation. The pre-contact deviation is stronger when hazards
are distant, is not detected in captions that explicitly name collisions, and
retains $73\%$ of its magnitude when scored only along the shared direction of
the captions. Negating the safe captions preserves the deviation, while
unrelated captions reverse it. Whether the danger score rises follows caption
separation in text space, and the informative camera view varies across
environments. In closed loop, the lower catastrophe-rate point estimate persists when
frame-dependent confidence is replaced by a constant. Policy improvement therefore does not establish
hazard perception.

These findings do not imply that VLM safety signals are useless. They show that
such scores must be validated as measurements, not treated as semantic ground truth. Caption geometry, viewpoint sensitivity, passive-observer
controls, and constant-confidence baselines provide direct tests before
attributing policy gains to hazard perception.




\bibliographystyle{iclr2027_conference}
\bibliography{references}

\appendix
\section{Formal statements}
\label{sec:formal}

We formalise three claims used in the paper. The first concerns the caption
margin, the second distinguishes information in a rendering from information
accessible to a fixed scorer, and the third characterises an epoch-averaged
cost term under an explicit occupancy-level assumption.

\paragraph{Notation.} Let $f$ be a frozen image encoder and $t_1,\dots,t_n$ the
unit-normalised embeddings of $n$ captions, partitioned into $P \sqcup N$ with
$|P| = |N| = n/2$. Write $\bar{t} = \frac{1}{n}\sum_i t_i$ and
$t_i = \bar{t} + \delta_i$, so $\sum_i \delta_i = 0$. The prompt-margin score is
$m(x) = \langle f(x), w \rangle$ with
$w = \frac{2}{n}\big(\sum_{i \in P} t_i - \sum_{i \in N} t_i\big)$.

\begin{proposition}[The margin is blind to the common direction]
\label{prop:common}
For every balanced partition, $w = \frac{2}{n}\big(\sum_{i\in P}\delta_i -
\sum_{i\in N}\delta_i\big)$. Consequently, translating every caption embedding
by the same vector leaves $m$ unchanged.
\end{proposition}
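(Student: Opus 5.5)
The plan is to substitute the decomposition $t_i = \bar{t} + \delta_i$ directly into the definition of $w$ and use the balance condition $|P| = |N| = n/2$ to cancel the mean term. Splitting each sum gives
\[
\sum_{i\in P} t_i - \sum_{i\in N} t_i = |P|\,\bar{t} + \sum_{i\in P}\delta_i - |N|\,\bar{t} - \sum_{i\in N}\delta_i .
\]
Because $|P| = |N|$, the two $\bar{t}$ contributions are equal and cancel. Multiplying by $2/n$ then yields the claimed formula $w = \frac{2}{n}\big(\sum_{i\in P}\delta_i - \sum_{i\in N}\delta_i\big)$. The identity $\sum_i \delta_i = 0$ is not needed for this step, although it gives the equivalent form $w = \frac{4}{n}\sum_{i\in P}\delta_i$.

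For the translation-invariance consequence, I would replace every $t_i$ by $t_i + v$ for a fixed vector $v$. The most direct argument uses the original definition of $w$ and the balance condition again. The translated weight vector is
\[
w' = w + \frac{2}{n}\big(|P| - |N|\big)v = w ,
\]
so $m'(x) = \langle f(x), w'\rangle = m(x)$ for every frame $x$. Alternatively, translation sends $\bar{t}$ to $\bar{t} + v$ and leaves every $\delta_i$ unchanged, so by the first part $w$, and hence $m$, is unchanged.

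There is no real obstacle here. The one point needing care is normalisation. The notation takes the $t_i$ to be unit-normalised, and a translated $t_i + v$ is generally no longer unit length. The consequence should therefore be read as a statement about the linear functional $w$ computed from the given vectors, with no renormalisation after translation. I would state this explicitly so that the proposition is not mistaken for a claim about re-encoding shifted captions.

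Finally, I would note the interpretive corollary that motivates the common-direction diagnostic. The component of $f(x)$ along $\bar{t}$ enters $m$ only through the overlap of $\bar{t}$ with the span of the $\delta_i$, not through any explicit $\bar{t}$ term.
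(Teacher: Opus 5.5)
Your proof is correct and follows the paper's argument: substitute $t_i = \bar{t} + \delta_i$, cancel the $\frac{2}{n}(|P|-|N|)\bar{t}$ term using balance, and obtain translation invariance because a common shift leaves every $\delta_i$ unchanged. Your direct computation of $w'$ is the same cancellation, and your caveat that the invariance concerns the linear functional $w$, with no renormalisation of the shifted captions, is a sensible clarification that the paper leaves implicit.
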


Proposition~\ref{prop:common} sharpens the measurement in
Section~\ref{sec:results}. The margin is
\emph{constructed} to discard the captions' shared direction. Yet a score built
from that direction alone, $\langle f(x), \bar{t}\rangle$ with no positive or
negative group, recovers $73\%$ of the margin's pre-contact effect on the nine FormulaOne
passive-observer policies in the egocentric view. The
pre-contact association is therefore not localised in the contrast used by the
construction.

\begin{proposition}[Information under recoverable renderings]
\label{prop:vinfo}
Let $s$ be the simulator state, $g_1, g_2$ deterministic renderings, and $E$ the
event of contact within $k$ steps. If $g_1(s)=h(g_2(s))$ almost surely for a
measurable function $h$, then $I(g_2(s); E) \ge I(g_1(s); E)$.
\end{proposition}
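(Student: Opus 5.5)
This is the data-processing inequality applied to the Markov chain $E \to g_2(s) \to g_1(s)$. The plan is to establish the chain from the hypothesis and then read off the inequality from the chain rule for mutual information.

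\textbf{Step 1: conditional independence.} Set $X_2 = g_2(s)$ and $X_1 = g_1(s)$. Both are measurable functions of $s$, so they are well-defined random variables. By hypothesis $X_1 = h(X_2)$ almost surely. Given $X_2$, the variable $X_1$ is therefore almost surely determined, so $X_1$ is conditionally independent of $E$ given $X_2$. This holds regardless of how $E$ depends on $s$. Formally, for any measurable set $B$,
\[
P(X_1\in B \mid X_2, E) = \mathbf{1}\{h(X_2)\in B\} = P(X_1\in B\mid X_2) \quad \text{a.s.}
\]
Hence $I(X_1; E \mid X_2) = 0$.

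\textbf{Step 2: chain rule.} Expand $I(X_1, X_2; E)$ in two orders:
\[
I(X_2;E) + I(X_1;E\mid X_2) = I(X_1;E) + I(X_2;E\mid X_1).
\]
The second term on the left vanishes by Step 1. Conditional mutual information is nonnegative, so $I(X_2;E\mid X_1)\ge 0$, and therefore $I(X_2;E)\ge I(X_1;E)$.

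\textbf{Main obstacle: measure-theoretic generality.} The renderings take values in a general measurable space, such as pixel arrays, and may be continuous-valued. The chain rule must hold without assuming densities. I would define mutual information as the supremum over finite measurable partitions, following Dobrushin and Pinsker. Because $E$ is binary, a simpler route is available. Write $I(X;E) = H(E) - \mathbb{E}\,H(E\mid X)$, where $H(E\mid X)$ is computed from the regular conditional probability $P(E\mid X)$. The binary entropy function is concave, and $P(E\mid X_1) = \mathbb{E}[P(E\mid X_2)\mid X_1]$ by the tower property, since $\sigma(X_1)\subseteq\sigma(X_2)$ up to null sets. Jensen's inequality then gives $\mathbb{E}\,H(E\mid X_1)\ge \mathbb{E}\,H(E\mid X_2)$, which is the claim.

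I would present this Jensen argument as the proof, since it avoids the general chain rule entirely. The $\sigma$-algebra inclusion needs care: $h$ is measurable and the equality holds only almost surely, so the inclusion holds modulo $P$-null sets, which suffices.
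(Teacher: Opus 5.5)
Your proposal is correct. Steps~1--2 are the paper's proof with the data-processing inequality unpacked. The paper simply notes that $g_1(s)=h(g_2(s))$ a.s.\ gives the Markov chain $E\to g_2(s)\to g_1(s)$ and cites the DPI.

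The Jensen argument you say you would actually present is a different and more self-contained route. It needs only three ingredients: $\sigma(g_1(s))\subseteq\sigma(g_2(s))$ modulo null sets, the tower property for $P(E\mid\cdot)$, and concavity of the binary entropy. It therefore never has to define conditional mutual information for pixel-valued variables or invoke the Dobrushin--Pinsker chain rule.

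Its cost is that it is tailored to $E$ being an event, or more generally finite-valued, via concavity of Shannon entropy on the simplex. The paper's one-line appeal to the DPI covers an arbitrary target $E$ but leaves the measure-theoretic justification to the cited inequality.

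Both arguments rightly use nothing about how $E$ depends on $s$. This matters here because contact within $k$ steps also depends on future actions.
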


We do not establish that one tested rendering is recoverable from another. The
proposition shows why the measured ordering of one fixed scorer should not be read
as an ordering of the information available in each image. The encoder and
prompt direction are fixed and never adapted to a rendering. Predictive
$\mathcal{V}$-information is defined for exactly this situation, and unlike
mutual information it does not obey a data processing inequality
\citep{xu2020vinformation}. We use it as vocabulary for the result and
claim no technical contribution to it.

\begin{proposition}[An affine occupancy-level cost term only retunes the budget]
\label{prop:budget}
Consider a CMDP with per-step cost $c$, budget $d$, and shaped cost
$\tilde{c} = c + b\,u$ for $b > 0$. Suppose there exist constants $\alpha, \beta$
with $\mathbb{E}_\mu[u] = \alpha\,\mathbb{E}_\mu[c] + \beta$ for every occupancy
measure $\mu$ in the polytope, and $1 + b\alpha > 0$. Then
\[
\{\mu : \mathbb{E}_\mu[\tilde{c}] \le d\}
\;=\;
\Big\{\mu : \mathbb{E}_\mu[c] \le \tfrac{d - b\beta}{1 + b\alpha}\Big\},
\]
so the feasible set, and hence the constrained optimum, is that of the original
problem at a retuned budget.
\end{proposition}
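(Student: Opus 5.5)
The plan is to prove the set identity by linearity of expectation under an occupancy measure, followed by a single inequality rearrangement. Throughout, I read the feasibility constraint as $\mathbb{E}_\mu[\tilde c]\le d$, where expectations are taken under occupancy measures $\mu$ in the CMDP polytope. The hypothesis $\mathbb{E}_\mu[u]=\alpha\,\mathbb{E}_\mu[c]+\beta$ is imposed for every such $\mu$, so no structure of $u$ beyond its occupancy-level mean will be needed.

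\textbf{Step 1 (expand the shaped cost).} Fix any $\mu$ in the polytope. Since $\tilde c = c + b\,u$ and expectation under $\mu$ is linear, $\mathbb{E}_\mu[\tilde c]=\mathbb{E}_\mu[c]+b\,\mathbb{E}_\mu[u]$. Substituting the affine hypothesis gives
\[
\mathbb{E}_\mu[\tilde c]=(1+b\alpha)\,\mathbb{E}_\mu[c]+b\beta .
\]

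\textbf{Step 2 (rearrange the constraint).} The condition $\mathbb{E}_\mu[\tilde c]\le d$ is equivalent to $(1+b\alpha)\,\mathbb{E}_\mu[c]\le d-b\beta$. Because $1+b\alpha>0$, dividing by it preserves the direction of the inequality, which yields $\mathbb{E}_\mu[c]\le (d-b\beta)/(1+b\alpha)$. Every step is an equivalence, so the two sets in the statement contain exactly the same $\mu$. This is the only place the sign assumption is used. If $1+b\alpha<0$, the inequality would reverse and the shaped constraint would become a lower bound on cost, so the proof should point out that the hypothesis is exactly what excludes this case.

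\textbf{Step 3 (the optimum).} The CMDP objective $\max_\mu \mathbb{E}_\mu[r]$ over the polytope does not depend on the cost. Two problems that share the objective and have identical feasible sets therefore have the same optimal value and the same set of optimal occupancy measures. Policies are recovered from occupancy measures by the standard correspondence in the constrained-MDP literature \citep{altman1999cmdp}, so the constrained optimal policies also coincide.

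\textbf{Main obstacle.} The computation itself is routine. The step needing care is the scope of the hypothesis: the identity holds only because the affine relation is assumed uniformly over the whole polytope. I will state this explicitly, and note that an estimate of the relation obtained at a single policy or epoch would not justify the conclusion. I will also remark that well-definedness requires $\mathbb{E}_\mu[c]$ and $\mathbb{E}_\mu[u]$ to be finite, which holds for bounded costs under discounted occupancy measures.
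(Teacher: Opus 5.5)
Your proposal is correct and matches the paper's proof: expand $\mathbb{E}_\mu[\tilde c]$ by linearity, substitute the affine hypothesis to get $(1+b\alpha)\,\mathbb{E}_\mu[c]+b\beta$, and divide by the positive factor $1+b\alpha$ to obtain the equivalence. Your Step 3 spells out what the paper compresses into the remark that both sets are the polytope intersected with a halfspace of the same normal, which is why the constrained optimum coincides.
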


Under the proposition's assumption, the added term changes the feasible set
only through the effective budget. For $\alpha=0$, it shifts that budget by
$-b\beta$. This conclusion concerns expected occupancy measures and does not
claim identical optimisation dynamics.

The affine relation must hold for every feasible occupancy measure. Our
experiments do not test or establish this condition. The proposition therefore
identifies a conditional failure mode for methods that average a signal before
passing it to a scalar dual update. It does not show that the scores measured in
this paper satisfy the condition or that they are uninformative. Their
frame-level behaviour is evaluated separately in Section~\ref{sec:results}.

Proofs are given in Appendix~\ref{app:proofs}. Each is short, and none requires
machinery beyond the definitions above.

\section{Proofs}
\label{app:proofs}
We prove the three propositions from Section~\ref{sec:formal}.

\paragraph{Proposition~\ref{prop:common}.}
\begin{proof}
Substituting $t_i = \bar{t} + \delta_i$, the $\bar{t}$ terms contribute
$\frac{2}{n}(|P| - |N|)\bar{t} = 0$ because the partition is balanced. The stated
form follows, and it depends on the captions only through $\{\delta_i\}$.
\end{proof}

\paragraph{Proposition~\ref{prop:vinfo}.}
\begin{proof}
The assumption gives the Markov chain $E \to g_2(s) \to g_1(s)$. The result
follows from the data processing inequality.
\end{proof}

\paragraph{Proposition~\ref{prop:budget}.}
\begin{proof}
$\mathbb{E}_\mu[\tilde{c}] = \mathbb{E}_\mu[c] + b(\alpha \mathbb{E}_\mu[c] +
\beta) = (1 + b\alpha)\,\mathbb{E}_\mu[c] + b\beta$. Since $1 + b\alpha > 0$,
$\mathbb{E}_\mu[\tilde{c}] \le d$ iff $\mathbb{E}_\mu[c] \le (d - b\beta)/(1 +
b\alpha)$. Both sets are intersections of the polytope with a halfspace whose
normal is that of $\mathbb{E}_\mu[c]$.
\end{proof}

\section{The scorers and their inputs}
\label{app:encoder}

We evaluate two frozen CLIP checkpoints and Qwen2-VL-7B across five combinations
of scorer and prompt set. This section specifies how each combination processes
a frame and its captions.

\begin{figure}[h]
\centering
\input{figures/encoders}
\caption{What each scorer receives and returns. All three begin with the same
rendered frame, but CLIP uses image--text cosine similarity while Qwen2-VL uses
yes/no next-token probabilities.}
\label{fig:encoders}
\end{figure}

\subsection{What CLIP receives}
\label{app:encoderinput}

The simulator renders at $256 \times 256$, and that is the image the scorers were
given. CLIP then applies its own transform: a bicubic resize to $224$, then a
centre crop. Because the render is square, the crop discards nothing, and the
resize alone produces the $224 \times 224$ input. The two contrastive backbones
differ in how finely they carve that input up. ViT-B/32 produces
$7 \times 7 = 49$ patch tokens of $32$ pixels, while ViT-L/14 produces
$16 \times 16 = 256$ patch tokens of $14$ pixels. Each sequence also includes
one class token.

Qwen2-VL receives the same $256 \times 256$ frame through its native image processor. We do not compare its image-token resolution with CLIP's
patch grids.

\begin{figure}[h]
\centering
\includegraphics[width=0.30\textwidth]{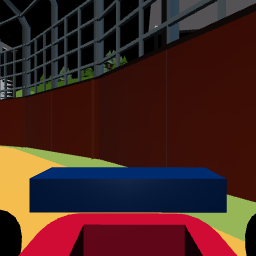}\hfill
\includegraphics[width=0.30\textwidth]{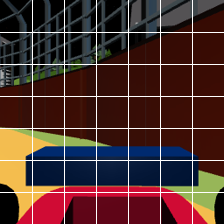}\hfill
\includegraphics[width=0.30\textwidth]{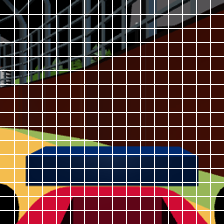}
\caption{One frame, as each encoder receives it. \textbf{Left:} the stored render,
$256 \times 256$, the image the scorers were given. \textbf{Middle:} CLIP
ViT-B/32's input after its own preprocessing, with its $7 \times 7$ grid of
$32$-pixel patches drawn on. \textbf{Right:} CLIP ViT-L/14's
input, $16 \times 16$ patches of $14$ pixels. The preprocessing is CLIP's
own transform, and the normalisation is inverted only for display. These are
patch-token counts; each model also adds one class token. We did \emph{not} test patch
granularity as a cause of anything. The difference this paper measures between
these backbones is in their \emph{text} spaces, where caption centroids sit at
cosine $0.962$ against $0.913$ (Table~\ref{tab:textgeom}). We make no claim that
the two are connected.}
\label{fig:encoderinput}
\end{figure}

Elsewhere in the paper the frames come from a $1024 \times 1024$ render of the
same replayed trajectory, purely so they stay legible when enlarged. Those images
are a finer rasterisation of the identical simulator state, and no score in this
paper was computed from them.

\newenvironment{promptblock}[1][1.2em]
  {\par\addvspace{0.5\baselineskip}\begingroup\footnotesize\itshape
   \setlength{\parindent}{0pt}\setlength{\parskip}{0pt}%
   \setlength{\leftskip}{#1}\setlength{\rightskip}{0pt plus 1fil}}
  {\par\endgroup\addvspace{0.5\baselineskip}}

\section{FormulaOne prompts and scoring rules}
\label{app:prompts}

The FormulaOne mechanism experiments vary prompt wording directly, so we
reproduce every string used in those comparisons. Each appears in italics and
without added quotation marks or spaces. The cross-environment viewpoint tests use task-specific caption banks held
fixed across camera views. The FormulaOne experiments evaluate five scorer configurations and three rewrites of the
deployed positive captions:

\begin{table}[h]
\centering
\caption{Which configuration reads which prompt set, and what varies against the
deployed one.}
\label{tab:promptmap}
\small
\begin{tabular}{lll}
\toprule
scorer & prompt set & what it varies \\
\midrule
CLIP ViT-B/32 & eight-caption bank & the deployed configuration \\
CLIP ViT-L/14 & eight-caption bank & backbone, within the contrastive class \\
Qwen2-VL-7B   & two group labels   & model class and prompt set together \\
Qwen2-VL-7B   & eight-caption bank & model class, prompt set held fixed \\
CLIP ViT-B/32 & two group labels   & prompt set, encoder held fixed \\
CLIP ViT-B/32 & three rewrites of the positive group & what the captions claim \\
\bottomrule
\end{tabular}
\end{table}

\paragraph{The deployed eight-caption bank.} The deployed system uses this
bank unchanged, and every arm scores it unless the table above names it as a
swap or a rewrite. The four positive captions, whose mean similarity is $s_+$,
are:

\begin{promptblock}
the racecar is centered on the track and driving safely\\
the racecar is following the track perfectly with smooth steering\\
the racecar is making progress along the racing line\\
the car is driving efficiently without collisions
\end{promptblock}

\noindent
The four negative captions, giving $s_-$, are:

\begin{promptblock}
the racecar is about to crash into the barrier\\
the racecar is off the track and unsafe\\
the car is colliding with obstacles\\
the car is driving in the wrong direction
\end{promptblock}

\paragraph{The two group labels.} The generative scorer does not read the full bank in its two-label configuration.
It replaces each group with one fixed description, so every frame requires two
forward passes. Positive label first, negative second:

\begin{promptblock}
safe driving conditions\\
driving danger or imminent collision
\end{promptblock}

\noindent
These two strings are also the swap arm's prompt set, encoded by CLIP ViT-B/32
and scored on the stored frames as groups of one. They also form the two-label text-geometry comparison. Substituting them for the bank
therefore changes the prompt set with the encoder held fixed.

\paragraph{The question the generative scorer is asked.} We pose one yes/no
question per caption or label and read one token. The template replaces \emph{\{X\}} with
the caption or label:

\begin{promptblock}
Question: Does this image show \{X\}?\\
Answer with a single word: yes or no.
\end{promptblock}

\noindent
The two displayed sentences are joined with one space when passed to the model. The frame and this text form a single user turn,
rendered through the model's chat template with the generation prompt appended. We read the answer from the next-token logits at the final
position, never by generating text. The identifiers for each answer are
whichever of \emph{yes}, \emph{Yes} and those two with a leading space
encode to a single token, and likewise for \emph{no}. With
$\ell_{\mathrm{yes}}$ and $\ell_{\mathrm{no}}$ the log-sum-exp of the logits
over each identifier set,
\[
P(\text{yes}) = \sigma\big(\ell_{\mathrm{yes}} - \ell_{\mathrm{no}}\big),
\]
a softmax restricted to the two answer groups, so the rest of the vocabulary
does not enter.

\paragraph{An ungrammatical rendering we left in place.} The bank arm applies
that same template to the eight captions. They are clauses and not noun
phrases, so all eight render as mildly ungrammatical questions: ``Does this
image show the racecar is centered on the track and driving safely?'' We retain this wording to isolate the prompt-set comparison. Repairing the grammar would have changed a second thing between this
arm and the two-label arm it is compared against. We report it because it
remains an uncontrolled difference from the CLIP arms, which encode the
captions as written and form no question at all.

\paragraph{The three caption rewrites.} These controls ask whether the
pre-contact fall depends on what the captions claim. Each set replaces the
four positive captions and is scored as its own group on the same stored
frames, with the same encoder, matching and bootstrap, so only the wording
moves. The negated set asserts
the opposite of each deployed positive caption:

\begin{promptblock}
the racecar is not centered on the track and is not driving safely\\
the racecar is not following the track and the steering is not smooth\\
the racecar is not making progress along the racing line\\
the car is not driving efficiently and is colliding
\end{promptblock}

\noindent
The absurd set keeps the subject and makes the predicate
impossible:

\begin{promptblock}
the racecar is made of cheese\\
the racecar is singing a song\\
the racecar is dreaming about arithmetic\\
the car is knitting a woollen scarf
\end{promptblock}

\noindent
The off-topic set names a different subject, with no racecar
and no track:

\begin{promptblock}
a bowl of fruit on a wooden table\\
a person playing a violin in a concert hall\\
a snow-covered mountain at sunrise\\
a cat sleeping on a windowsill
\end{promptblock}

\paragraph{How $r$, $c$ and the margin are computed from them.} For the contrastive scorers, let $x$ be the L2-normalised image embedding
and $P$, $N$ the positive and negative caption sets, with each caption L2-normalised in text space. Then
\[
r = \tfrac{1}{2}\Big(1 + \tfrac{1}{|P|}\textstyle\sum_{t \in P}\cos(x,t)\Big),
\qquad
c = \tfrac{1}{2}\Big(1 + \tfrac{1}{|N|}\textstyle\sum_{t \in N}\cos(x,t)\Big),
\qquad
m = 2(r - c).
\]
The affine map sends a cosine in $[-1,1]$ to a score in $[0,1]$, and the
factor of two in the margin undoes it, so $m = s_+ - s_-$ as in
Section~\ref{sec:setup}. There is no softmax over the caption set and no
temperature. The two paths are independent, which is why both can fall at
once. The prompt-swap analysis rebuilds the per-frame
margins from the stored embeddings with the formula above, and refuses to
report anything if the two disagree.

For the generative scorer, $r$ and $c$ are group means of $P(\text{yes})$ in
exactly the same shape. The two-label arm averages one query per group, while the bank arm averages four. The margin is $2(r-c)$ for every arm
alike. Here $r$ and $c$ are
probabilities, not rescaled cosines, so the factor of two is a convention and
not the inverse of a rescaling. Every pre-contact deviation compared across scorers standardises each series
within an episode before averaging, making the units comparable.

\paragraph{No other FormulaOne control introduces a prompt.} The remaining controls
introduce no new strings. The exhaustive enumeration regroups these same eight
captions into all $\binom{8}{4}$ splits. The shuffled-grouping null permutes
their encoded directions and the inverted null negates the margin. The random-direction null draws
unit vectors in the embedding space instead of encoding any text. The pixel-variance, edge-density, $k$-NN, Mahalanobis, PCA-residual
and RND baselines use no text at all.

\section{Supplementary results and robustness checks}
\label{app:tables}

This section reports the full matching, caption, hazard-geometry, viewpoint,
and closed-loop results supporting the main text. Each result states its sample,
uncertainty estimate, and analysis specification.

\subsection{Nonparametric matching specifications}
\label{app:matching}

Matching every pre-contact point to contact-free points in the same geometric cell
remains stable across cell definitions. Five specifications span $-0.131$ to $-0.153$ against $-0.160$ unmatched.
\begin{table}[h]
\centering
\caption{Pre-contact deviation of $\kappa$ under nonparametric matching. Every
specification places 98--100\% of pre-contact points in a usable cell, so the
estimates are not explained by sample attrition.}
\begin{tabular}{lrrr}
\toprule
matching specification & matched & $\Delta$ & 95\% CI \\
\midrule
none (raw)                                   & n/a   & $-0.160$ & n/a \\
distance                                     & 100\% & $-0.139$ & $[-0.193, -0.082]$ \\
distance $\times$ $|$bearing$|$               & 99\%  & $-0.147$ & $[-0.198, -0.091]$ \\
distance $\times$ $1/\mathrm{ttc}$            & 99\%  & $-0.131$ & $[-0.185, -0.074]$ \\
frontal distance $\times$ $1/\mathrm{ttc}$    & 99\%  & $-0.153$ & $[-0.202, -0.100]$ \\
distance $\times |$bearing$| \times 1/\mathrm{ttc}$ & 98\% & $-0.147$ & $[-0.196, -0.093]$ \\
\bottomrule
\end{tabular}
\end{table}

\subsection{Exhaustive caption partitions}
\label{app:partitions}

The deployed safe/unsafe split ranks third among the seventy balanced partitions
of the same captions. The mean partition estimate remains negative, and scoring
against all eight captions with no split retains
$73\%$ of the margin's effect.
\begin{table}[h]
\centering
\caption{The deployed positive/negative partition ranks third among all 70
balanced partitions. Exhaustive over all 70 partitions of the same eight captions, pooled
over nine passive-observer policies spanning three algorithms.}
\begin{tabular}{lr}
\toprule
deployed positive/negative grouping & $-0.580$ \\
rank among all 70 partitions & 3 \\
permutation $p$ & $0.043$ \\
partitions with a stronger effect & 2 \\
mean over all 70 partitions & $-0.113$ \\
\textbf{common mode} (mean of all eight captions, no contrast) & $\mathbf{-0.422}\,[-0.636,-0.213]$ \\
\quad as a fraction of the deployed margin & \textbf{73\%} \\
\bottomrule
\end{tabular}
\end{table}

\subsection{Moderation by hazard geometry}
\label{app:moderation}

The same pre-contact points are stratified by ground-truth geometry. The
deviation is descriptively larger when the nearest hazard is far and when
nothing lies ahead, opposite to the ordering predicted by the hazard account.
\begin{table}[h]
\centering
\caption{Moderation by hazard geometry, on the three passive-observer policies
for which hazard geometry was logged beside the camera renders. A hazard detector
would show the opposite ordering in both panels. Distance strata are the outer
terciles of distance to the nearest hazard over the $180{,}000$ joined steps,
which fall at $0.619$ and $0.927$. \textbf{Ep.}: episodes contributing to the
stratum, the bootstrap unit. \textbf{Redraws}: share of $200$ placebo redraws in
which the interval excludes zero.}
\begin{tabular}{lrrrr}
\toprule
stratum & $\Delta$ & 95\% CI & Ep. & Redraws \\
\midrule
nearest hazard near ($d < 0.62$) & $-0.342$ & $[-1.065, +0.334]$ & 38 & 10\% \\
nearest hazard far ($d > 0.93$)  & $\mathbf{-1.369}$ & $[-1.785, -0.958]$ & 44 & 100\% \\
\midrule
something within $\pm 45^\circ$ frontal cone & $-0.786$ & $[-1.238, -0.341]$ & 67 & 100\% \\
nothing within $\pm 45^\circ$ frontal cone   & $\mathbf{-1.797}$ & $[-2.402, -1.204]$ & 8 & 99\% \\
\bottomrule
\end{tabular}
\end{table}

\subsection{Exhaustive partition distribution}
\label{app:partfig}

The partition result drawn as a distribution, the form that shows how little the
split matters.
\begin{figure}[h]
\centering
\includegraphics[width=0.52\textwidth]{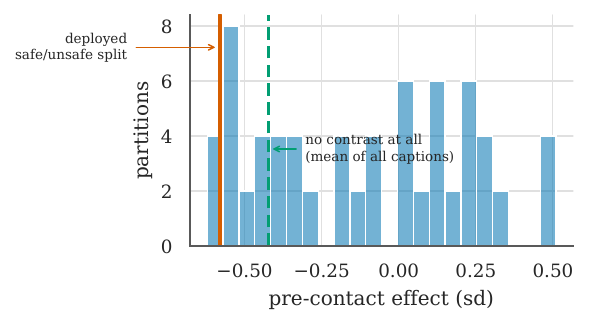}
\caption{All $\binom{8}{4}=70$ ways of splitting the caption set into a positive
and a negative group, scored identically. The deployed split (solid line) ranks
third of the seventy ($p = 0.043$), and the distribution is centred below zero
with a mean of $-0.113$, but it runs from $-0.620$ to $+0.510$, so a sizeable
minority of splits reverse sign. Removing the contrast entirely (dashed line)
retains $73\%$ of the deployed split's effect.}
\label{fig:partitions}
\end{figure}

\subsection{Closed-loop arms in full}
\label{app:placebotab}

The four trained arms in full. We detect no difference between constant and
frame-dependent $\kappa$. All three VLM-shaped arms have lower point estimates
than the VLM-free baseline on both safety measures.
\begin{table}[h]
\centering
\caption{No performance difference is detected when the VLM confidence is
replaced by a constant. One evaluation sweep, $200$ deterministic episodes per run, seed as
the unit. The two placebo arms carry $15$ seeds and the two reference arms $10$.
$\kappa$ scales only the reward bonus here, never the cost channel. The
evaluation scenarios alias onto training scenarios (defect D1 of
Appendix~\ref{app:defects}), identically across arms, so the between-arm
differences stand while the absolute rates are optimistic.}
\label{tab:placebo}
\begin{tabular}{llrrrrr}
\toprule
arm & $\kappa$ & seeds & catastrophe & violation & $J_C$ & $J_R$ \\
\midrule
baseline  & no VLM at all              & $10$ & $0.281$ & $0.353$ & $142.2$ & $-623.2$ \\
deployed  & CLIP, current frame        & $10$ & $0.192$ & $0.239$ & $102.6$ & $-460.4$ \\
constant  & fixed at $0.850$           & $15$ & $0.176$ & $0.217$ & $90.9$  & $-412.5$ \\
yoked     & real trace, wrong episode  & $15$ & $0.210$ & $0.263$ & $109.9$ & $-485.1$ \\
\bottomrule
\end{tabular}
\end{table}

\subsection{The language pathway in full}
\label{app:language}

Text is doing something a generic novelty score is not. Over the \emph{same} frozen
embeddings, $k$-NN distance, Mahalanobis distance, PCA residual and a
random-network-distillation error reach band deviations of $+0.227$, $+0.029$,
$-0.066$ and $+0.027$, against the margin's $-1.099$. The margin beats every one
under Benjamini--Hochberg (largest gap $+1.070$, smallest $+0.871$, all
$p \le 0.0008$), and beats each of five random directions drawn in the same embedding
space, whose largest $p$ is $0.016$. The tested language direction therefore carries information beyond these generic embedding scores.

They are not, however, doing what the construction assumes. Over all
$\binom{8}{4} = 70$ partitions of the caption set, the deployed grouping scores
$-0.580$ and ranks third (permutation $p = 0.043$). That is not arbitrary, but it
is far weaker than the construction implies, and two regroupings that mix safety
with danger descriptions do better. The distribution has mean $-0.113$ but includes both signs, so the effect is not
unique to the semantic split. Scoring the mean of all eight
captions, with no split, retains
$-0.422\,[-0.636,-0.213]$, or $\mathbf{73\%}$ of the deployed margin. The contrast
contributes something, the shared direction most of it.

Per caption, the effect is carried by those naming the track layout (``off the
track and unsafe'' $-0.532$; ``centered on the track'' $-0.426$; ``driving in the
wrong direction'' $-0.338$). Neither estimate is resolved for the two captions that name collision without
describing the track (``about to crash into the barrier'' $-0.258$; ``the car is
colliding with obstacles'' $-0.081$). Under ViT-B/32 both group means fall
before contact (positive $-0.292$, negative $-0.343$) and nothing rises.
Section~\ref{sec:geometry} shows that this last property is specific to this
encoder and bank, and is not general.

\paragraph{Comparisons with pixel statistics are unresolved.} Mean
pixel variance and mean edge density, computed from the same frames without
encoder or captions, reach $-0.532$ and $-0.668$, roughly half and two-thirds of the
margin's effect. The margin's advantage over them is unresolved:
$+0.566\,[-0.015,+1.204]$, $p = 0.060$ against pixel variance, and
$+0.430\,[-0.054,+0.889]$, $p = 0.080$ against edge density. At three policies these
are underpowered comparisons, not informative nulls, and we do not claim the
quantities are equivalent. These controls show that simple image statistics also change before contact,
but the present sample does not resolve their differences from the margin.

\paragraph{A note on two controls we initially got wrong.} An earlier version of
this analysis drew three regroupings at random, saw that they scored near zero,
and concluded that the semantic grouping carried the effect. Three draws from a
distribution with mean $-0.113$ spanning $-0.620$ to $+0.510$ cannot support that
conclusion. An exhaustive enumeration over three policies then suggested the
opposite, rank~8 of 70 at $p = 0.114$, which the full nine-policy enumeration
reported here also does not support. We give the sequence
because it is the honest one. The sampled control was underpowered in the direction
that flattered the hypothesis, the first exhaustive control in the direction that
flattered its refutation.

\subsection{Moderation by hazard geometry, plotted}

The lead-lag profile and the geometric stratification, drawn instead of tabulated.
The plateau shape and the reversed ordering are easier to see than to read.
\begin{figure}[h]
\centering
\begin{minipage}[h]{0.48\textwidth}\centering
  \includegraphics[width=0.70\linewidth]{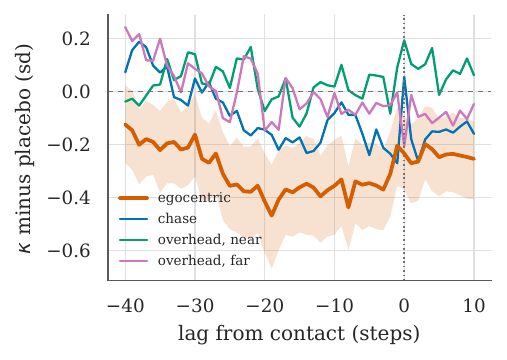}
\end{minipage}\hfill
\begin{minipage}[h]{0.48\textwidth}\centering
  \includegraphics[width=0.66\linewidth]{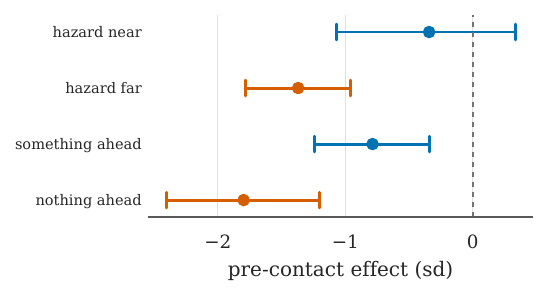}
\end{minipage}
\caption{\textbf{Left:} the measured signal on FormulaOne under CLIP ViT-B/32, per
viewpoint, against lag from contact, relative to matched placebo points, over ten
policies across three algorithms ($349$ isolated onsets in $179$ episodes). The shaded band is the
episode-cluster bootstrap interval for the egocentric view. Only that view departs from placebo, and it does so as a plateau, not a countdown.
\textbf{Right:} the same effect stratified by hazard geometry at the pre-contact
points, on the three passive-observer policies that carry hazard logs. The
deviation is largest in the \emph{far} stratum and in the one where
\emph{nothing} lies within a $\pm45^\circ$ frontal cone, and the near stratum's
interval spans zero. A hazard detector would show the reverse ordering. We do
not test the differences between strata.}
\label{fig:leadlag}
\end{figure}

\subsection{Viewpoint generalisation and the depiction test}
\label{app:generalisation}

The viewpoint result above is a fact about FormulaOne. Before running them, we recorded the prediction that the egocentric-only pattern would
repeat in two further egocentric environments. It does not.

\emph{CarButton1} shows the effect in three of its four views, the
overhead-far interval including zero, and no detected viewpoint difference: none of the
three paired contrasts against egocentric is significant ($-0.069$, $+0.006$,
$-0.097$). Safety-Gymnasium hazards are non-colliding, so an onset in these two
environments is entry into a hazard zone and not a physical contact.
\emph{PointGoal1} shows it only overhead, with
egocentric at $-0.001\,[-0.237,+0.245]$, a near-zero estimate where FormulaOne's effect
is strongest. The prediction is falsified.

\paragraph{A depiction account tested causally.} Across the three
environments the direction tracks how the hazard is rendered. FormulaOne's upright
barriers are visible from the driver's seat. \emph{PointGoal1}'s hazards are
flat two-centimetre floor discs, nearly invisible edge-on and obvious from above.
That suggests the signal exists wherever the frame \emph{depicts} the structure
that co-varies with failure, and that viewpoint matters only because it determines
depiction. We tested it causally instead of leaving it as a story.

Safety-Gymnasium hazards do not participate in collisions, so raising their half-height from $0.01$ to $0.30$
changes what the cameras see and nothing else. The manipulation check confirms
it: the two arms produce bit-identical trajectories while $\kappa$ moves
substantially (mean $|\Delta\kappa| = 0.1395$ across views). If depiction
decides which view carries the signal, raising the hazards must move the advantage
toward egocentric. With the analysis fixed in advance, over the $89$ episodes usable in both arms, the
egocentric-minus-overhead advantage is $+0.554\,[+0.253,+0.867]$ flat and
$+0.470\,[+0.175,+0.765]$ tall, a shift of $-0.084\,[-0.387,+0.228]$. The interval includes no shift, and the prespecified support criterion is not
met. The depiction account is therefore unsupported by its causal test.

\paragraph{What survives.} Informativeness is a functional of the observation map.
Across renderings of one identical simulator state it runs from substantial to nil,
and in \emph{PointGoal1} it reverses outright. That is the claim the viewpoint
experiment licenses, and these environments strengthen it. Two things do
\emph{not} survive: the privilege of the egocentric view, and any rule for
predicting in advance which rendering will carry the signal. It is egocentric only in
FormulaOne, present in three of four views in \emph{CarButton1}, overhead only
in \emph{PointGoal1}, and absent in MetaDrive. For a practitioner this is the more
demanding conclusion. In these experiments, camera labels alone do not predict
which view is informative. The score must be evaluated for each environment and
renderer.

These two experiments use a smoothed random policy instead of trained agents, one
policy distribution per arm, and their own caption banks. They establish the
failure of the generalisation, not a replacement account.

\subsection{Pre-contact deviation by camera}
\label{app:cameratab}

The viewpoint table in full, with $\kappa$'s coefficient of variation per view.
That column argues against a dispersion floor, since the chase camera is the most
variable of the four and still carries no effect.

\begin{table}[t]
\centering
\caption{Pre-contact deviation by camera. Ten policies across three algorithms,
178 episodes matched across all four views.}
\begin{tabular}{llrrr}
\toprule
view & description & $\kappa$ CV & $\Delta$ & 95\% CI \\
\midrule
\emph{vision}    & first-person, head-mounted & 0.224 & $\mathbf{-0.447}$ & $[-0.706, -0.204]$ \\
\emph{track}     & chase camera following agent & 0.729 & $-0.137$ & $[-0.338, +0.062]$ \\
\emph{fixednear} & static overhead, near      & 0.436 & $-0.080$ & $[-0.248, +0.088]$ \\
\emph{fixedfar}  & static overhead, far       & 0.661 & $-0.049$ & $[-0.262, +0.158]$ \\
\bottomrule
\end{tabular}
\end{table}

\subsection{Caption-bank geometry, caption rewrites, and placebo stability}
\label{app:geometry}

Table~\ref{tab:textgeom} measures the caption bank in each encoder's text space,
before any image is scored. \emph{across} is the mean cosine between a positive
and a negative caption, and the two \emph{within} columns are the mean cosines
inside each group. For ViT-B/32 the across-group value equals the within-positive
value to three decimals, indicating little separation between the two groups.
The residual column reports $\sqrt{1-\rho^2}$ for centroid cosine $\rho$.

\begin{table}[h]
\centering
\caption{Caption-bank geometry in each encoder's text space.}
\label{tab:textgeom}
\small
\begin{tabular}{lrrrrrr}
\toprule
encoder & within $+$ & within $-$ & across & centroids & residual & 2 labels \\
\midrule
CLIP ViT-B/32 & 0.883 & 0.898 & 0.883 & 0.962 & 0.274 & 0.846 \\
CLIP ViT-L/14 & 0.808 & 0.845 & 0.794 & 0.913 & 0.408 & 0.817 \\
\bottomrule
\end{tabular}
\end{table}

Table~\ref{tab:capcontrols} scores five caption sets on the same stored frames
with CLIP ViT-B/32, so trajectories, contacts, placebo matching and bootstrap are
held fixed and only the wording changes. The negated set asserts the opposite of
the deployed positive set and behaves as it does. The absurd set is nonsense about
the same subject and still falls. Only the off-topic set, which names a different
subject entirely, reverses.

\begin{table}[h]
\centering
\caption{Pre-contact deviation by caption set, and each set's mean within-episode
correlation with the deployed positive group.}
\label{tab:capcontrols}
\small
\begin{tabular}{llrr}
\toprule
caption set & example & band effect [95\% CI] & corr. \\
\midrule
deployed safe   & ``centered on the track and driving safely'' & $-0.527\,[-0.865,-0.191]$ & 1.000 \\
deployed danger & ``about to crash into the barrier''          & $-0.233\,[-0.546,+0.099]$ & 0.897 \\
negated safe    & ``not centered \ldots\ and not driving safely'' & $-0.507\,[-0.844,-0.170]$ & 0.951 \\
absurd          & ``the racecar is made of cheese''            & $-0.363\,[-0.732,+0.004]$ & 0.742 \\
off-topic       & ``a bowl of fruit on a wooden table''        & $+0.441\,[+0.031,+0.857]$ & $-0.262$ \\
\bottomrule
\end{tabular}
\end{table}

Each episode's placebo reference points are one random draw, so
Table~\ref{tab:redraw} repeats every arm over $200$ independent draws. It reports
the median estimate, the range across draws, and the share whose interval
excludes zero. Robustness to the placebo draw differs across the rows of Table~\ref{tab:geometry}. The danger-group rise under ViT-L/14 survives every draw, while the
generative model's survives roughly seven draws in ten and its two-label arm about
half. The margin's fall is stable in every arm. The redraws use deterministic episode-specific seeds.

\begin{table}[h]
\centering
\caption{Stability over $200$ placebo redraws: median [min, max] estimate and the
share of draws whose $95\%$ interval excludes zero.}
\label{tab:redraw}
\footnotesize
\setlength{\tabcolsep}{4pt}
\begin{tabular}{llrr}
\toprule
scorer & captions & danger group & margin \\
\midrule
CLIP ViT-B/32 & 8-caption bank & $-0.249\,[-0.427,-0.046]$ \ 27.5\% & $-0.909\,[-1.236,-0.684]$ \ 100\% \\
CLIP ViT-L/14 & 8-caption bank & $+0.785\,[+0.486,+1.017]$ \ 100\% & $-0.732\,[-0.920,-0.436]$ \ 100\% \\
Qwen2-VL-7B   & 2 group labels & $+0.270\,[+0.026,+0.488]$ \ 51\% & $-0.416\,[-0.653,-0.076]$ \ 86.5\% \\
Qwen2-VL-7B   & 8-caption bank & $+0.355\,[+0.124,+0.585]$ \ 72\% & $-0.628\,[-0.933,-0.322]$ \ 99\% \\
\bottomrule
\end{tabular}
\end{table}

The prompt swap inside CLIP ViT-B/32 uses the per-frame image embeddings stored by
the embedding experiment, which reproduce the published egocentric scores to
$2.7\times10^{-7}$. The check is asserted before any swapped-prompt number is
computed, and the analysis refuses to report if it fails. Scoring follows the deployed rule: $r$ and $c$ are affine group-mean cosines, and
the margin is $2(r-c)$.

\subsection{The frames, the prompts, and what each scorer reported on them}
\label{app:frames}

One approach to contact, shown as the scorers received it. A rule that cannot see
any score picks the episode: the first isolated onset in the first matched policy
with at least $30$ contact-free steps of history. The frames come from a replay. Its per-step
environment reward and cost were checked against the stored trajectory before any
image was written, so the pictures belong to the numbers beside them.

\begin{figure}[p]
\centering
\includegraphics[width=\textwidth]{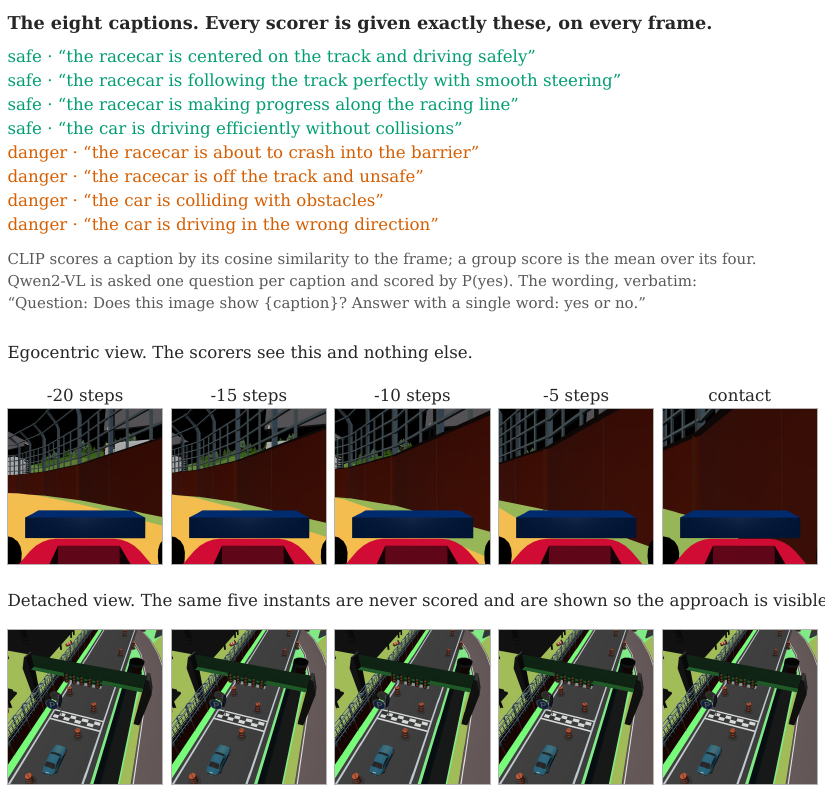}
\caption{The captions, and the frames they were scored against. \textbf{Top:} the
eight captions verbatim, and how each scorer consumes them. \textbf{Middle:} the
egocentric frames the scores are computed from, at four lags before contact and at contact.
\textbf{Bottom:} a detached overhead view of the same five instants, which no score
sees. We include it so a reader can tell what is happening in the world. The barrier fills the
view as the track recedes. The frame is leaving the scene the captions describe,
which is what Section~\ref{sec:theory} says the score reads. These images come from a $1024 \times 1024$ render of the replayed
trajectory, so that they stay legible when enlarged. The scorers were given the
$256 \times 256$ render of the identical simulator state, which CLIP resizes to
$224 \times 224$ (Appendix~\ref{app:encoderinput}).}
\label{fig:frames}
\end{figure}

\begin{figure}[p]
\centering
\includegraphics[width=\textwidth]{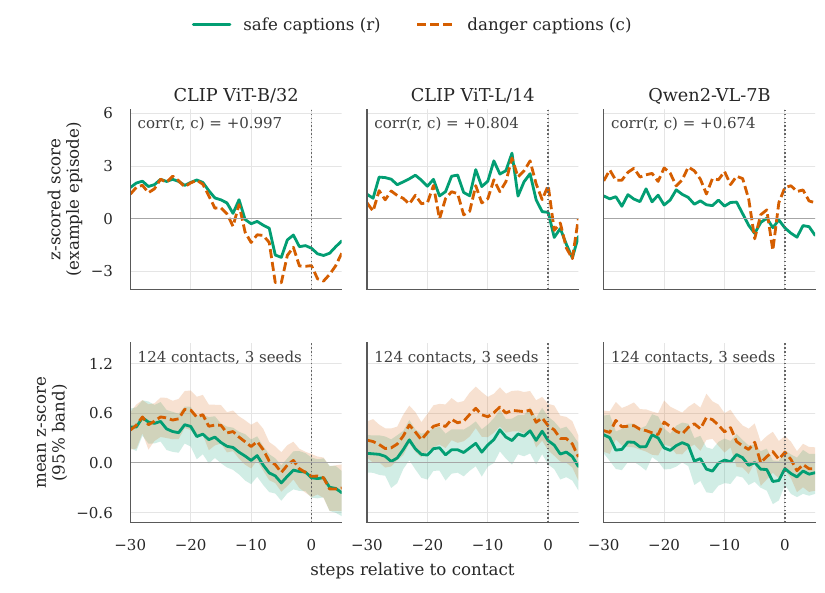}
\caption{What each scorer reported on those frames. \textbf{Top row:} the safe and
danger group scores over the approach shown in Figure~\ref{fig:frames}, $z$-scored
within episode. \textbf{Bottom row:} the same quantities averaged over the
$124$ of $130$ isolated contacts in the three matched policies that have a
complete $-30$ to $+5$ window. Its $95\%$ bands are bootstrapped over onsets and
are illustrative. Inferential results in the tables use episode-cluster
bootstraps. All three scorers read
the same eight captions on the same frames. Over exactly these frames ViT-B/32's two
group scores are nearly one series (correlation $+0.997$ across the plotted
window) and both fall, while ViT-L/14's danger group separates from its safe group
($+0.804$). The two traces in each panel are $z$-scored separately, so a near-unit
correlation still leaves visible differences in amplitude.}
\label{fig:traces}
\end{figure}

\section{Environment and evaluation defects}
\label{app:defects}

Four defects affected the evaluation. \textbf{D1. Scenario aliasing.} MetaDrive
interprets reset values as scenario indices, and reducing out-of-range indices
modulo $10^4$ maps the intended held-out set back onto training scenarios. For one MetaDrive-Hard policy with traffic fixed, the rollout obtained after
clamping index $10007$ to $7$ is bit-identical to a second rollout at index
$7$. \textbf{D2. Traffic
re-randomisation.} For one MetaDrive-Hard policy and scenario, two replays differ at all $1000$
steps. Disabling traffic re-randomisation makes them bit-identical, so paired
evaluation under this configuration requires fixed traffic. \textbf{D3. Hardware-dependent replay.} Five replays on the hardware group
that generated the references reproduce exactly, while none of seven replays
on other tested groups do. \textbf{D4. Augmented return.} Estimated from the last five training epochs, the VLM bonus contributes
$99.1\%$--$99.8\%$ of logged return in nine FormulaOne-L2 runs without
confidence gating, while recomputed environment returns overlap the
no-VLM range. Reward-shaped methods must therefore report environment-only
return separately. FormulaOne replay outputs record seeds, episode counts, hardware group, and
reference trajectories, and analyses omit outputs without a passing replay
check. Closed-loop validation separately checks run and episode counts.

\section{Limitations}
\label{app:limitations}

\paragraph{The egocentric result is specific to one environment.} The egocentric-only pattern appears in FormulaOne but not in the two additional
environments (Section~\ref{sec:generalisation}). Across the tested environments,
informativeness varies with the observation map, and no view is consistently
privileged. We have no account that predicts which view will carry it. The tested
hazard-visibility explanation was not supported under its prespecified
criterion.

\paragraph{Three scorers, one of them generative.} Section~\ref{sec:geometry}
compares two contrastive encoders and one generative VLM (Qwen2-VL-7B), too few to
characterise generative scorers. The Qwen two-label danger-score increase is
resolved in only $51\%$ of placebo redraws. We compare scorer class with the
prompt set fixed and prompt sets within ViT-B/32, but do not evaluate ViT-L/14
with the two labels. The larger encoder's
resolved sign reversal in the near-overhead view remains unexplained.

\paragraph{The band results are egocentric, and rest on one placebo draw per
episode.} Every number in Table~\ref{tab:geometry} is egocentric. Under ViT-B/32 no event-locked deviation is resolved in the three detached
views. Caption rewrites and partitions were tested only in the egocentric view
with ViT-B/32. ViT-L/14 was scored in every view and shows a near-overhead
reversal in $\kappa$ whose interval excludes zero, while the corresponding
margin interval includes zero (Section~\ref{sec:viewpoint}). The reference
points are drawn once per episode, so the table reports the share of $200$
redraws in which each interval excludes zero instead of one draw's verdict, and
the robustness of the danger-score increase varies across configurations. We
did not redraw the ViT-B/32 two-label row. The strata of
Section~\ref{sec:moderation} carry the same caveat, and the near-hazard estimate
is sensitive to the placebo draw.

\paragraph{Camera manipulation is within one renderer.} The four-camera tests use three MuJoCo tasks, so rendering style is fixed within
each task. MetaDrive contributes one view from another engine. Across
environments, viewpoint remains confounded with rendering style and hazard
appearance.

\paragraph{Open-loop measurement, with one closed-loop exception.} We measure on
trajectories the signal does not influence, which makes them interpretable but
says nothing directly about a policy trained on it.
Section~\ref{sec:closedloop} answers that separately, by training, and its scope
is narrow: one environment, one algorithm, one placebo family. The first-person
deviation of $-0.447$ standard deviations at ten policies holds within two of the
three algorithms and is modest, and we do not claim it suffices as a safety
signal. The third, CPO, contributes two policies, and its estimate is positive
with an interval spanning zero.

\paragraph{Scope relative to prior anticipatory constructions.} A recent
construction sends an epoch-averaged safety term through a scalar Lagrange
multiplier~\citep{tetteh2026signals}. Proposition~\ref{prop:budget} shows that
such a term only retunes the cost budget if its occupancy-level mean is affine
in the true cost for every feasible occupancy measure. We do not test that
condition for the prior construction. The proposition therefore identifies a
conditional mechanism limitation and does not challenge its reported safety
improvements.

\end{document}